\newcommand{\E}{\mathbb{E}}
\newcommand{\x}{\vec{x}}
\newcommand{\vr}{\vec{r}}
\newcommand{\Cdot}{\sbox0{$\mC_t$}\dot{\usebox{0}}}
\theoremstyle{plain}
\newtheorem{theorem}{Theorem}[section]
\theoremstyle{definition}
\theoremstyle{remark}
\newtheoremstyle{named}{}{}{\itshape}{}{\bfseries}{.}{.5em}{Theorem #3}
\theoremstyle{named}
\newtheorem*{namedtheorem}{Theorem}
\newcolumntype{P}[1]{>{\RaggedRight\hspace{0pt}}p{#1}}
\newcolumntype{X}[1]{>{\RaggedRight\hspace*{0pt}}p{#1}}
\colorlet{linecol}{black!75}
\newcommand{\highlight}[2]{\colorbox{#1!17}{$\displaystyle #2$}}
\renewcommand{\highlight}[2]{\colorbox{#1!17}{#2}}
\def\1{\bm{1}}
\def\vzero{{\bm{0}}}
\def\valpha{{\bm{\alpha}}}
\def\vepsilon{{\bm{\epsilon}}}
\def\vmu{{\bm{\mu}}}
\def\veta{{\bm{\eta}}}
\def\vphi{{\bm{\phi}}}
\def\vf{{\bm{f}}}
\def\vh{{\bm{h}}}
\def\vr{{\bm{r}}}
\def\vs{{\bm{s}}}
\def\vw{{\bm{w}}}
\def\vx{{\bm{x}}}
\def\vy{{\bm{y}}}
\def\vz{{\bm{z}}}
\def\mC{{\bm{C}}}
\def\mI{{\bm{I}}}
\def\mSigma{{\bm{\Sigma}}}
\DeclareMathAlphabet{\mathsfit}{\encodingdefault}{\sfdefault}{m}{sl}
\SetMathAlphabet{\mathsfit}{bold}{\encodingdefault}{\sfdefault}{bx}{n}
\title{Soft Diffusion \\ Score Matching For General Corruptions}
\author{%
  Giannis Daras \\
  Google Research, UT Austin\thanks{The work was done during an internship at Google.} \\
  \texttt{giannisdaras@utexas.edu} \And
  Mauricio Delbracio \\
  Google Research \\
  \texttt{mdelbra@google.com}  \And
  Hossein Talebi \\
  Google Research \\
  \texttt{htalebi@google.com} \And  
  Alexandros G. Dimakis \\
  UT Austin \\
  \texttt{dimakis@austin.utexas.edu} \And
  Peyman Milanfar \\
  Google Research \\
  \texttt{milanfar@google.com}
}
\begin{document}
\usetikzlibrary{math} %

\maketitle
    
\begin{abstract}
We define a broader family of corruption processes that generalizes previously known diffusion models. To reverse these general diffusions, we propose a new objective called Soft Score Matching that provably learns the score function for any linear corruption process and yields state of the art results for CelebA. Soft Score Matching incorporates the degradation process in the network. 
Our new loss trains the model to predict a clean image, \textit{that after corruption}, matches the diffused observation.
We show that our objective learns the gradient of the likelihood under suitable regularity conditions for a family of corruption processes. 
We further develop a principled way to select the corruption levels for general diffusion processes and a novel sampling method that we call Momentum Sampler. 
We show experimentally that our framework works for general linear corruption processes, such as Gaussian blur and masking.
We achieve state-of-the-art FID score $1.85$ on CelebA-64, outperforming all previous linear diffusion models. We also show significant computational benefits compared to vanilla denoising diffusion.

\end{abstract}

\section{Introduction}
Score-based models~\citep{ncsn, ncsnv2, ncsnv3} and Denoising Diffusion Probabilistic Models (DDPMs)~\citep{sohl_thermodynamics, ddpm, ddim} are two powerful classes of generative models that produce samples by inverting a diffusion process.
These two classes have been unified under a single framework~\citep{ncsnv3} and are widely known as diffusion models. Diffusion modeling has found great success in a wide range of applications~\citep{diffusion_survey1, diffusion_survey2}, including image~\citep{imagen, dalle2, latent_diffusion, dhariwal2021diffusion}, audio~\citep{diffusion_for_audio, richter2022speech, serra2022universal}, video generation~\citep{ho2022video}, as well as solving inverse problems~\citep{daras_dagan_2022score, kadkhodaie2021stochastic, ddrm, kawar2021snips, mri_paper, saharia2021image, laumont2022bayesian, whang2022deblurring, chung2022improving}.

\begin{figure}[t]
\vspace{-1cm}
    \centering
    \includegraphics[width=0.86\linewidth]{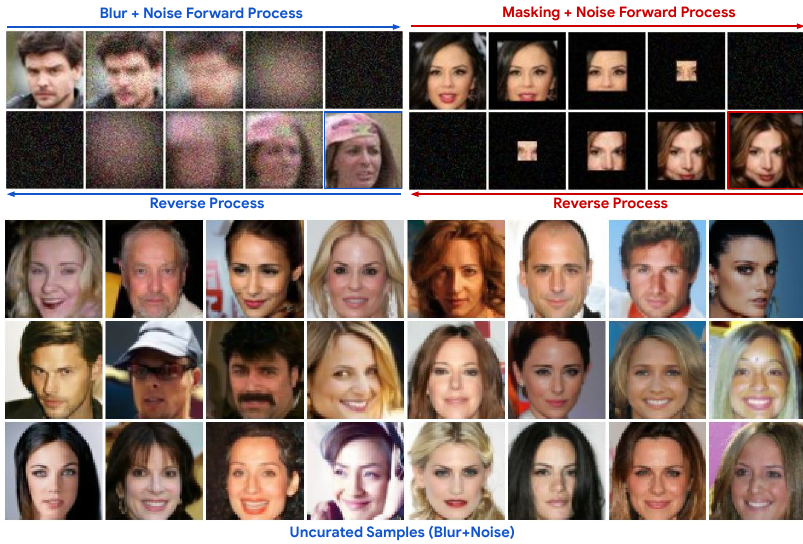}%
    \caption{\small{Top two rows: Demonstration of our generalized diffusion method. Instead of corrupting by only adding noise, we propose a framework to provably learn the score function to reverse any linear diffusion (left: blur and noise, right: masking and noise). Our (blur and noise) models achieve state-of-the-art FID score $\mathbf{1.85}$ on CelebA-$64$. Uncurated samples shown in the last three rows.}}
    \label{fig:figure1}
\end{figure} 
\citet{karras2022elucidating} analyze the design space of diffusion models. The authors identify three stages: i) the noise scheduling, ii) the network parametrization (each one leads to a different loss function), iii) the sampling algorithm. We argue that there is one more important step: \textit{choosing how to corrupt}. Typically, the diffusion is additive noise of different magnitudes (and sometimes input rescalings). There have been a few recent attempts to use different corruptions~\citep{Deasy2021HeavytailedDS, Hoogeboom2021AutoregressiveDM, Hoogeboom2022EquivariantDF, Avrahami2021BlendedDF, nachmani2021denoising, Johnson2021BeyondIC, lee2022progressive, ye2022hitting}, but the results are usually inferior to diffusion with additive noise. Also, a common framework on how to properly design general corruption processes is missing.

We present such a principled framework for learning to invert a general class of corruption processes.  We propose a new objective called \textit{Soft Score Matching}  that provably learns the score for any regular linear corruption process.
Soft Score Matching incorporates the filtering process in the network and trains the model \textit{to predict a clean image that after corruption matches the diffused observation}.

Our theoretical results show that Soft Score Matching learns the score (i.e. likelihood gradients) for corruption processes that satisfy a regularity condition that we identify: the diffusion must transform any image into any other image with nonzero likelihood.
Using our method and Gaussian Blur paired with little noise as the diffusion mechanism, we achieve state-of-the-art FID on CelebA (FID 1.85).
We also show that our corruption process leads to generative models that are faster compared to vanilla Gaussian denoising diffusion.

\textbf{Our contributions:} 
  a) We propose a learning objective that: i) provably learns the score for a wide family of regular diffusion processes and ii) enables learning under limited randomness in the diffusion.
  b) We present a principled way to select the intermediate distributions. Our method minimizes the Wasserstein distance along the path from the initial to the final distribution. 
   c) We propose a novel sampling method that we call Momentum Sampler: our sampler uses a convex combination of corruptions at different diffusion levels and is inspired by momentum methods in optimization.
   d)  We train models on CelebA and CIFAR-10. Our trained models on CelebA achieve a new state-of-the-art FID score of $1.85$ for linear diffusion models while being significantly faster compared to models trained with vanilla Gaussian denoising diffusion.

\section{Background}
Diffusion models are generative models that produce samples by inverting a corruption process. The corruption level is typically indexed by a time $t$, with $t=0$ corresponding to clean and $t=1$ to fully corrupted images. The diffusion process can be discrete or continuous. The two general classes of diffusion models are Score-Based Models~\citep{ncsn, ncsnv2, ncsnv3} and Denoising Diffusion Probabilistic Models (DDPMs)~\citep{sohl_thermodynamics, ddpm}. 

The typical diffusion in score-based modeling is additive noise of increasing magnitude. The perturbation kernel at time $t$ is: $q_t(\vx_t | \vx_0) = \mathcal N(\vx_t | \vmu = \vx_0, \mSigma=\sigma_t^2 \mI)$, where $\vx_0\sim q_0$ is a clean image. Score models are trained with the Denoising Score Matching (DSM) objective:
\begin{gather}
    \min_{\theta}\E_{t\sim U[0, 1]}w_t \left[ \E_{(\vx_0, \vx_t) \sim q_0(\vx_0) q_t(\vx_t|\vx_0)} \left| \left| s_{\theta}(\vx_t | t) - \nabla_{\vx_t}\log q_t(\vx_t|\vx_0) \right|\right|^2\right],
    \label{eq:dsm_score}
\end{gather}
where $w_t$ scales differently the weights of the inner objectives. If we train for each noise level $t$ independently, given enough data and model capacity, the network is guaranteed to recover the gradient of the log likelihood~\citep{dsm}, known as the score function. In other words, the model $s_{\theta}(\vx_t|t)$ is trained such that: $s_{\theta}(\vx_t|t)\approx \nabla_{\vx_t}\log q_t(\vx_t)$. In practice, we use parameter sharing and conditioning on time $t$ to learn all the scores.
Once the model is trained, we start from a sample of the final distribution, $q_1$, and then use the learned score to gradually denoise it~\citep{ncsn, ncsnv2}. 
The final variance $\sigma_{1}^2$ is selected to be very large such that the distribution $q_1$ is approximately Gaussian, i.e. the signal to noise ratio tends to $0$.

DDPMs corrupt by rescaling the input images and by adding noise. 
The corruption can be modelled with a Markov chain with perturbation kernel $q_{t}(\vx_t| \vx_{t - \Delta t}) = \mathcal N(\vx_t | \vmu=\sqrt{1 - \beta_t}\vx_{t - \Delta t}, \mSigma=\beta_t \mI)$. Typically, $\beta_1 = 1$ and hence $q_1=\mathcal N(0, I)$. DDPMs are also trained with the DSM objective which is derived by minimizing an evidence lower bound (ELBO)~\citep{ddpm}.

In their seminal work, \citet{ncsnv3} observe that the diffusions of both Score-Based models and DDPMs can be expressed as solutions of Stochastic Differential Equations (SDEs) of the form:
\begin{gather}
    \mathrm{d}\vx = \vf(\vx, t) \mathrm{d}t + g(t)\mathrm{d}\vw,
    \label{eq:forward_sde_general_form}
\end{gather}
where $\vw$ is the standard Wiener process. Particularly, Score-Based models use: $f(\vx, t) = \vzero, g(t) = \sqrt{\frac{\mathrm{d}\sigma^2_t}{\mathrm{d}t}}$ and DDPMs use: $\vf(\vx, t) = -\frac{1}{2}\beta_t \vx$, $g(t) = \sqrt{\beta_t}$. As explained earlier, for Score-Based models we need large noise at the end for the final distribution to approximate a normal distribution. Hence, the corresponding SDE is named \textbf{V}ariance \textbf{E}xploding (VE) SDE~\citep{ncsnv3}. DDPMs usually have a final distribution of unit variance and hence their SDE is known as the \textbf{V}ariance \textbf{P}reserving (VP) SDE~\citep{ncsnv3}. \citet{ncsnv3} also propose another SDE with bounded variance, the subVP-SDE, that experimentally yields better likelihoods.

For both Score-Based models and DDPMs, Eq. \eqref{eq:forward_sde_general_form} is known as the Forward SDE. This SDE is reversible~\citet{anderson} and the Reverse SDE is given below:
\begin{gather}
    \mathrm{d}\vx = \left[ \vf(\vx, t) - g^2(t)\nabla_\vx \log q_t(\vx)\right]\mathrm{d}t + g(t)\mathrm{d}\bar \vw,
\end{gather}
where $\bar \vw$ is the standard Wiener process when the time flows in the reverse direction. Typically, $\nabla_\vx \log q_t(\vx)$ is approximated by $s_{\theta}(\vx_t|t)$ and samples are generated by solving the Reverse SDE~\citep{ncsnv3}.

\section{Method}
Our framework for training diffusion models with more general corruptions includes three components: i) the training objective, ii) the sampling, iii) the scheduling of the corruption mechanism.

\subsection{Training Objective}
We study corruption processes of the form:
\begin{gather}
    \vx_t = \mC_t \vx_0 + s_t \veta_t, %
    \label{eq:general_diffusion_form}
\end{gather}
where $\mC_t: \mathbb R^n \to \mathbb R^n$ is a deterministic linear operator,
$\veta_t$ is a Wiener process, and $s_t$ is a non-negative scalar controlling the noise level at time $t$, and $\vx_0 \sim q_0(\vx)$. We further denote with $\sigma_t^2$ the variance of the noise at level $t$ and we assume that it is a non-decreasing function of $t$.
Unless stated otherwise, we assume that time is continuous and runs from $t=0$ to $t=1$. Additionally, we assume that at $t=0$, we have $\mC_0 = \mI_{n\times n}$ and $\sigma_{0}=0$, i.e. $t=0$ corresponds to natural images.  We also assume that recovering $\vx_0$ from $\vx_t$ is harder as $t$ gets larger (i.e., entropy of $q_t(\vx_0 \mid \vx_t)$ increases with $t$). Eq. \eqref{eq:general_diffusion_form} defines a general class of diffusion processes, that includes (as special cases) the VE, VP and subVP SDEs used in \cite{ncsnv3}. Our diffusion is the sum of a deterministic linear corruption of $\vx_0$ and a stochastic part that progressively adds noise. For any corruption process of this family, we are interested in learning the scores, i.e. $\nabla_{\vx_t}\log q_t(\vx_t)$ for all $t$. 

For the vanilla Gaussian denoising diffusion, the celebrated result of~\citet{dsm} shows that we only need access to the gradient of the conditional log-likelihood, $\nabla_{\vx_t} \log q_t(\vx_t|\vx_0)$, in order to learn the score, $\nabla_{\vx_t} \log q_t(\vx_t)$. By revisiting the proof of~\citet{dsm}, we find that this is actually true for a wide set of corruption processes, as long as some mild technical conditions are satisfied. In fact, the following general Theorem holds:
\begin{theorem}
Let $q_0, q_t$ be two distributions in $\mathbb R^n$. Assume that all conditional distributions, $q_t(\vx_t|\vx_0)$, are fully supported and differentiable in $\mathbb R^n$. Let:
\begin{gather}
    J_1(\theta) = \frac{1}{2}\E_{\vx_t \sim q_t}\left[\left|\left| \vs_{\theta}(\vx_t) - \nabla_{\vx_t} \log q_t(\vx_t)\right|\right|^2\right],
\end{gather}
\begin{gather}
    J_2(\theta) = \frac{1}{2}\E_{(\vx_0, \vx_t) \sim q_0(\vx_0) q_t(\vx_t|\vx_0)}\left[\left| \left| \vs_{\theta}(\vx_t) - \nabla_{\vx_t}\log q_t(\vx_t|\vx_0) \right|\right|^2\right].
\end{gather}
Then, there is a universal constant $C$ (that does not depend on $\theta$) such that: $J_1(\theta) = J_2(\theta) + C$.
\label{dsm_theorem}
\end{theorem}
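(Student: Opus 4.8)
The plan is to follow the structure of Vincent's original denoising score matching identity, now with the Gaussian conditional replaced by an arbitrary fully supported $q_t(x_t\mid x_0)$. First I would expand each objective by completing the square:
\[
J_1(\theta) = \tfrac12\mathbb E_{x_t\sim q_t}\norm{s_{\theta}(x_t)}^2 - \mathbb E_{x_t\sim q_t}\langle s_{\theta}(x_t),\,\nabla_{x_t}\log q_t(x_t)\rangle + \tfrac12\mathbb E_{x_t\sim q_t}\norm{\nabla_{x_t}\log q_t(x_t)}^2,
\]
\[
J_2(\theta) = \tfrac12\mathbb E_{(x_0,x_t)}\norm{s_{\theta}(x_t)}^2 - \mathbb E_{(x_0,x_t)}\langle s_{\theta}(x_t),\,\nabla_{x_t}\log q_t(x_t\mid x_0)\rangle + \tfrac12\mathbb E_{(x_0,x_t)}\norm{\nabla_{x_t}\log q_t(x_t\mid x_0)}^2.
\]
The third term in each line is manifestly independent of $\theta$, so I would define $C$ to be their difference. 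Because the $x_t$-marginal of $q_0(x_0)q_t(x_t\mid x_0)$ is exactly $q_t(x_t)$, the first (quadratic-in-$s_\theta$) terms are literally the same expectation. Hence the entire theorem reduces to showing that the two cross terms coincide.

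To verify this, I would rewrite the $J_1$ cross term using $\nabla_{x_t}\log q_t(x_t) = \nabla_{x_t}q_t(x_t)/q_t(x_t)$, so the density cancels the one supplied by the expectation and we are left with $\int s_{\theta}(x_t)\cdot\nabla_{x_t}q_t(x_t)\,\mathrm{d}x_t$. Writing $q_t(x_t)=\int q_0(x_0)q_t(x_t\mid x_0)\,\mathrm{d}x_0$ and pushing the gradient inside the integral gives $\nabla_{x_t}q_t(x_t)=\int q_0(x_0)\nabla_{x_t}q_t(x_t\mid x_0)\,\mathrm{d}x_0$, so the $J_1$ cross term equals $\int\!\!\int q_0(x_0)\,s_{\theta}(x_t)\cdot\nabla_{x_t}q_t(x_t\mid x_0)\,\mathrm{d}x_0\,\mathrm{d}x_t$. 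For the $J_2$ cross term I would substitute $\nabla_{x_t}\log q_t(x_t\mid x_0)=\nabla_{x_t}q_t(x_t\mid x_0)/q_t(x_t\mid x_0)$ — which is legitimate pointwise precisely because the conditionals are fully supported, so there is no division by zero — whereupon the factor $q_t(x_t\mid x_0)$ coming from the expectation cancels and one obtains the identical double integral. This yields $J_1(\theta)=J_2(\theta)+C$ with $C=\tfrac12\mathbb E_{x_t\sim q_t}\norm{\nabla_{x_t}\log q_t(x_t)}^2-\tfrac12\mathbb E_{(x_0,x_t)}\norm{\nabla_{x_t}\log q_t(x_t\mid x_0)}^2$, which does not involve $\theta$.

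I expect the only genuine obstacle to be analytic bookkeeping rather than algebra: one must justify differentiating under the integral sign in $\nabla_{x_t}\int q_0(x_0)q_t(x_t\mid x_0)\,\mathrm{d}x_0$ and the Fubini interchange used to arrive at the common double integral. Both follow from the stated regularity — full support and differentiability of each $q_t(\cdot\mid x_0)$, together with the implicit integrability that is needed for $J_1,J_2$ to be finite in the first place — via dominated convergence and the Leibniz rule. In the write-up I would package these as the "mild technical conditions'' flagged before the statement rather than grind through explicit domination bounds, noting that the corruption family in \eqref{eq:general_diffusion_form} (a linear operator plus additive Gaussian noise, hence smooth strictly positive Gaussian conditionals whenever $\sigma_t>0$) comfortably satisfies them.
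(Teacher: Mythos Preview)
Your proposal is correct and follows essentially the same route as the paper's proof: expand both squares, absorb the $\theta$-independent terms into constants, and identify the two cross terms via the log-derivative trick together with $q_t(x_t)=\int q_0(x_0)q_t(x_t\mid x_0)\,\mathrm{d}x_0$, a Fubini swap, and differentiation under the integral. The only cosmetic difference is direction---the paper starts from the $J_2$ cross term and works toward the $J_1$ form, whereas you reduce both to the common double integral---and your write-up is slightly more explicit about where full support and the Leibniz/Fubini conditions are invoked.
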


Theorem~\ref{dsm_theorem} implies that minimizing the second function is equivalent to minimizing the first one. The second function is nothing else than the DSM objective. Our main observation is that noise is not always necessary for learning the score using the DSM objective. A necessary condition is that the corruption process gives non-zero probability to all $\vx_t$ for any image $\vx_0$. This is easily achieved by adding Gaussian noise, but this is not the only option. The proof of this Theorem is deferred in the Appendix and it is following the calculations of \citet{dsm}.

\vspace{-.5em}
\paragraph{Network parametrization.}
For the class of diffusion processes given by Eq.~\eqref{eq:general_diffusion_form}, we have that: $q_t(\vx_t|\vx_0) = \mathcal N\left(\vx_t; \vmu = \mC_t \vx_0, \mSigma = \sigma^2_t \mI\right)$ and hence the objective becomes:
\begin{gather}
        L(t) = \frac{1}{2}\E_{(\vx_0, \vx_t) \sim q_0(\vx_0) q_t(\vx_t|\vx_0)}\left[ \left|\left| \vs_{\theta}(\vx_t|t) - \frac{\mC_t \vx_0 - \vx_t}{\sigma^2_t} \right|\right|^2 \right].
    \label{eq:dsm_simplified}
\end{gather}
As shown, the objective of the model is to predict the (normalized) difference, $\mC_t \vx_0 - \vx_t$, which is actually the noise, $\veta_t$. We argue that even though this objective is theoretically grounded, in many cases, it would not work in practice because we would need infinite samples to actually learn the vector-field $\nabla_{\vx_t} \log q_t(\vx_t)$ in a way that would allow sampling.

Assume that the corruption process is blurring (at different levels) paired with additive noise of small magnitude. 
The objective written in Eq.~\eqref{eq:dsm_simplified} learns the distributions of blurry (and slightly noisy images) by just removing noise. Hence, in practice we might only learn these distributions locally (around the blurry images) and hence we might not be able to reduce the blurriness. This point might be better understood after we present our Sampling Method in Section \ref{sec:sampling}.

To account for this problem, we propose a network reparametrization which leverages that we know the linear corruption mechanism, $C_t$. Specifically, we propose the following parametrization:
\begin{gather}
    \vs_{\theta}(\vx_t|t) = \frac{\mC_t \vh_{\theta}(\vx_t|t) - \vx_t}{\sigma^2_t}.
    \label{eq:reparametrization1}
\end{gather}
Crucially, the network incorporates the corruption process. The loss becomes:
\begin{gather}
    L(t) = \frac{1}{2}\E_{(\vx_0, \vx_t) \sim q_0(\vx_0) q_t(\vx_t|\vx_0)}\frac{1}{\sigma^4_t}\left[ \left|\left| \mC_t (\vh_{\theta}(\vx_t|t) - \vx_0) \right|\right|^2 \right].
    \label{eq:match_blurry_loss}
\end{gather}

When $C_t$ is a blurring matrix, this loss function is the MSE between the blurred prediction of $h_{\theta}$ and the blurred clean image. Finally, as observed in previous works~\citep{ncsn, ddpm, karras2022elucidating}, the optimization landscape becomes smoother when we are predicting the residual, instead of the clean image directly. This corresponds to the additional reparametrization:
\begin{gather}
    \vh_{\theta}(\vx_t|t) = \vphi_{\theta}(\vx_t | t) + \vx_t,
    \label{eq:reparametrization2}
\end{gather}
which leads to the final form of our loss function:
\begin{gather}
    L(t) = \frac{1}{2}\E_{(\vx_0, \vx_t) \sim q_0(\vx_0) q_t(\vx_t|\vx_0)}\frac{1}{\sigma^4_t}\left[ \left|\left| \mC_t\left(\vphi_{\theta}(\vx_t|t) - \vr_t\right)\right|\right|^2 \right],
\end{gather}
where $\vr_t$ is the residual with respect to the clean image, i.e. $\vr_t = \vx_0 - \vx_t$.
Following prior work, we use a single network conditioned on time $t$ that is optimized for all $L(t)$. Hence, the total loss is:
\begin{gather}
    L = \E_{t \sim \mathcal U[0, 1]}w(t) \bigg[\E_{(\vx_0, \vx_t) \sim q_0(\vx_0) q_t(\vx_t|\vx_0)} \left[ \left|\left| \mC_t\left(\vphi_{\theta}(\vx_t|t) - \vr_t\right)\right|\right|^2 \right]\bigg],
    \label{eq:loss}
\end{gather}
where the weights are usually chosen to be $1$ or $1/\sigma_t^2$~\citep{karras2022elucidating, kingma2021variational}.

We call our training objective \textbf{Soft Score Matching}. The name is inspired from ``soft filtering'' a term used in photography to denote an image filter that removes fine details (e.g., blur, fading, etc). As in the Denoising Score Matching, the network is essentially trained to predict the residual to the clean image, but in our case, the loss is in the filtered space. When there is no filtering matrix, i.e. $C_t=I$, we recover the DSM objective used in~\citep{ncsn, ncsnv2, ncsnv3}.

\label{sec:training_objective}
\vspace{-.5em}
\paragraph{Comparison with objectives used in other works.}  Previous~\citep{anonymous2022diffusion} or concurrent~\citep{bansal2022cold, rissanen2022generative, hoogeboom2022blurring} works that consider different degradations than Gaussian Diffusion,
use the heuristic objective of predicting the clean image, i.e. they minimize: $\left|\left|\vphi_{\theta}(\vx_t|t) - \vr_t\right|\right|$. This is actually an upper-bound on our loss, i.e. $||\mC_t\left(\vphi_{\theta}(\vx_t|t) - \vr_t\right)|| \leq ||\mC_t|| ||\vphi_{\theta}(x_t|t) - \vr_t||$. Since the spectral norm, $||\mC_t||$, is fixed, one can optimize for the \textit{upper-bound} by minimizing $||\vphi_{\theta}(x_t|t) - \vr_t||$. Instead, Soft Score Matching optimizes directly for learning the score. Experimentally, Soft Score Matching outperforms (ours FID: $1.85$, theirs: $5.91$), under the exact same setting, this simple baseline (see Experiments).

\subsection{Sampling}
\label{sec:sampling}
\begin{wrapfigure}[9]{r}{.45\textwidth}
\vspace{-2.5em}
\begin{minipage}[c]{.45\textwidth}
  \begin{algorithm}[H]
  \caption{Naive Sampler} 
  \label{alg:naive_sampler}
  \small
  \begin{algorithmic}
    \Require $p_1, \vphi_{\theta}, \mC_t, \sigma_{t}, \Delta t$
    \State $x_1 \sim p_1(\vx)$
    \For{$t=1 \ \mathrm{\mathbf{to}} \ 0 \,\, \mathrm{\mathbf{with \,\, step}} \ \mathrm{-\Delta}t$}
      \State $\hat \vx_0 = \vphi_{\theta}(\vx_t | t) + \vx_{t}$
      \State $\veta_t \sim \mathcal N(\vzero, \mI)$
      \State $\vx_{t - \Delta t} \gets \mC_{t - \Delta t} \hat \vx_0 + \sigma_{t - \Delta t}\veta_t$ 
    \EndFor
    \State \textbf{return} $x_0$
  \end{algorithmic}
  \end{algorithm}
\end{minipage}  
\end{wrapfigure}
\paragraph{Naive Sampler.} Once the model is trained, we need a way to generate samples. The simplest idea is take advantage of the fact that our trained model, $\vphi_{\theta}(x_t|t)$, can be used to get an estimate of the clean image, $\hat x_0$. Hence, whenever we want to move from a corruption level $t$ to a corruption level $t - \Delta t$, we can feed the image $x_t$ to the model to get an estimate of the clean image, $\hat x_0$, and then corrupt back to level $t - \Delta t$. This idea is summarized in Algorithm~\ref{alg:naive_sampler}.

\vspace{-.5em}
\paragraph{Momentum Sampler.} As we will show in the Experiments section, the naive sampler we presented above leads to generated images that lack diversity.
We propose an simple, yet novel, alternative method for sampling from the general linear diffusion model presented in Eq.~\eqref{eq:general_diffusion_form}. Our method is inspired by the continuous formulation of diffusion models that is introduced in \citet{ncsnv3}.

To develop our idea, we start with the toy setting where the dataset contains one single image, $\valpha \in \mathbb R^n$. Then the corruption of Eq.~\eqref{eq:general_diffusion_form} can be seen as the solution to the following SDE:
\begin{gather}
    \mathrm{d}\vx_t = \Cdot \valpha \mathrm{d}t + \sqrt{\frac{\mathrm{d}(\sigma^2_t)}{\mathrm{d}t}}\mathrm{d}\vw,
    \label{eq:forward_sde_linear}
\end{gather}
where $w$ is the standard Wiener process. 
This is a special case of the It\^{o} SDE:
$\mathrm{d}{\vx} = \vf(\vx, t)\mathrm{d}t + g(t)\mathrm{d}\vw$,
that appears in \citet{ncsnv3}. Particularly, $\vf(\vx, t) = \dot \mC_t \valpha$ and $g(t) = \sqrt{\frac{\mathrm{d}(\sigma^2_t)}{\mathrm{d}t}}$. We note that crucially we treat $\valpha$ as a constant and hence $\vf(\vx, t)$ does not depend on previous values of $\vx$. Under this setting, this SDE describes a diffusion process that is reversible~\cite{anderson}. The reverse is also a diffusion process and it is given below:
\begin{gather}
        \mathrm{d}\vx_t = \left[\Cdot \valpha - \frac{\mathrm{d}(\sigma^2_t)}{\mathrm{d}t}\nabla_{\vx_t} \log q_t(\vx_t) \right]\mathrm{d}t + \sqrt{\frac{\mathrm{d}(\sigma^2_t)}{\mathrm{d}t}} \mathrm{d}{\bar \vw}.
        \label{eq:backward_sde_linear}
\end{gather}
where $\bar \vw$ is a standard Wiener process when time flows backwards from $t=1$ to $t=0$. 
In practice, to solve Eq. \eqref{eq:backward_sde_linear}, we would discretize the SDE (i.e., apply Euler-Maruyama, and approximate the function derivatives with finite differences). We use step size $\Delta t > 0$,
\begin{gather}
    \vx_t - \vx_{t - \Delta t} = \left(\frac{\mC_t - \mC_{t - \Delta t}}{\Delta t} \valpha - \frac{\sigma_t^2 - \sigma_{t - \Delta t}^2}{\Delta t} \nabla_{\vx_t} \log q_t(\vx_t)\right)\Delta t + \sqrt{\frac{\sigma_{t}^2 - \sigma_{t - \Delta t}^2}{\Delta t}}\sqrt{\Delta t}\veta \iff \nonumber \\
    \vx_{t - \Delta t} - \vx_{t} = (\mC_{t - \Delta t} - \mC_t)\valpha - (\sigma_{t - \Delta t}^2 - \sigma_t^2)\nabla_{\vx_t}\log q_t(\vx_t) + \sqrt{\sigma_t^2 - \sigma_{t-\Delta t}^2}\veta, 
\end{gather}
where $\veta \sim N(\vzero, \mI)$. So far we assumed that the dataset has one single image $\valpha$ while $\valpha$ is actually sampled from a distribution, $q_0$.
For each $\valpha \sim q_0$, there is a different SDE describing the corruption process. The key idea is that at each timestep $t$, we use the trained network to predict $\mC_t \valpha$ and then we solve the associated reverse SDE. Remember from Eq. \eqref{eq:match_blurry_loss} that the network $\vh_{\theta}$ was trained such that $\vh_{\theta}(\vx_t|t) \approx \mC_t \vx_0$. 
Hence, we can use our trained network to estimate at each time $t$ the filtered  version, $\mC_t \valpha$, of the image $\valpha$ that it is to be generated.
For the estimation of $\nabla_{\vx_t}\log q_t(\vx_t)$ we also use our model that provably learns the score according to Theorem \ref{dsm_theorem}. Putting everything together:
\begin{gather}
    \qquad \Delta \vx_t = \vx_{t - \Delta t} - \vx_t =   \\\nonumber
    \resizebox{.93\hsize}{!}{$
    = \left(\mC_{t - \Delta t} - \mC_{t} \right)\left( 
    \tikzmarknode{xhat}{\highlight{blue}{$\vphi_{\theta}(\vx_t|t) + \vx_t$}}\right) - \left(\frac{\sigma_{t - \Delta t}^2 - \sigma_{t}^2}{\sigma_t^2}\right)\left(\mC_t\left(\tikzmarknode{xdhat}{\highlight{blue}{$\vphi_{\theta}(\vx_t|t) + \vx_t$}}\right) - \vx_t\right) + \sqrt{\sigma_{t}^2 - \sigma_{t - \Delta t}^2}\veta.$}
\begin{tikzpicture}[overlay,remember picture,>=stealth,nodes={align=left,inner ysep=2pt},<-]
    \path (xhat.south) -- (-35em,-0.1em) node[anchor=north west,color=blue!67] (scalep){$\hat \vx_0$};
    \draw [color=blue!87](xhat.south) |- ([xshift=-0.1ex,color=blue]scalep.south west);
    \draw [color=blue!87]{(xdhat.south)+(-4ex,0.0ex)} |- ([xshift=-0.1ex,color=blue]scalep.south west);
\end{tikzpicture}
\end{gather}
\vspace{-.3em}

Our sampler is summarized in Algorithm~\ref{alg:momentum_sampler}. It is interesting to understand what this sampler is doing. 
Essentially, there are two updates: one for deblurring and one for denoising. At the core of this update equation, is the prediction of the clean image, $\hat x_0$. Once the clean image is predicted, we blur it back to two different corruption levels, $t$ and $t - \Delta t$. The deblurring gradient is the residual between the blurred image at level $t - \Delta t$ and the blurred image at level $t$.

\begin{algorithm}[!htp]
  \caption{Momentum Sampler} 
  \label{alg:momentum_sampler}
  \small
  \begin{algorithmic}
    \Require $p_1, \vphi_{\theta}, \mC_t, \sigma_{t}, \Delta t$
    \State $\vx_1 \sim p_1(\vx)$
    \For{$t=1 \ \mathrm{\mathbf{to}} \ 0 \,\, \mathrm{\mathbf{with \,\, step}} \ \mathrm{-\Delta}t$}
      \State $\hat \vx_0 = \vphi_{\theta}(\vx_t | t) + \vx_{t}$ \Comment{Coarse prediction of the clean image.}
    \State $\hat \vy_t \gets \mC_t \hat \vx_0$ \Comment{Coarse prediction of filtered image at $t$.}
    \State $\veta_t \sim \mathcal N(\vzero, \mI)$
    \State $\hat \vepsilon_t \gets \hat \vy_t - \vx_t$ \Comment{Estimate of noise at $t$.}
    \State $\vz_{t - \Delta t} \gets \vx_t - \frac{(\sigma_{t - \Delta t}^2 - \sigma_t^2)}{\sigma_t^2} \hat \vepsilon_t + \sqrt{\sigma_t^2 - \sigma_{t-\Delta t}^2}\veta_t$ \Comment{Filtered image at $t$ with noise at $t - \Delta t$.}
    \State $\hat \vy_{t - \Delta t} \gets \mC_{t - \Delta t} \hat \vx_0$ \Comment{Coarse prediction of filtered image at $t - \Delta t$.}
    \State $\vx_{t - \Delta t} \gets \vz_{t - \Delta t} + (\hat \vy_{t - \Delta t} - \hat \vy_t)$ \Comment{Filtered image at $t - \Delta t$ with noise at $t - \Delta t$.}
    \EndFor
    \State \textbf{return} $\vx_0$
  \end{algorithmic}
\end{algorithm}

Interestingly, the denoising update is the same as the one used in typical score-based models (that only use additive noise). In fact, if there is no blur ($C_t = I$), our sampler becomes exactly the sampler used for the Variance Exploding (VE) SDE in \citet{ncsnv3}.

We call our sampler \textbf{Momentum Sampler} because we can think of it as a generalization of the update of the Naive Sampler, where there is a momentum term. To understand this better, we look at the setting where there is no noise. Then, the update rule of the Momentum Sampler is:
\begin{gather}
    \Delta \vx_t = \mC_{t - \Delta t} \hat \vx_0 - \mC_t \hat \vx_0.
\end{gather}
As seen, the first term is what the Naive Sampler would use to update the image at level $t$ and the second term is what the Naive Sampler would use to update the image at level $t - \Delta t$. If these two directions are aligned, then the gradient $\Delta \vx_t$ is small. Hence, there is a notion of momentum, analogous to how the term is used in classical optimization.

\vspace{-.5em}
\paragraph{Probability Flow Momentum Sampler.} The update-rule of our Momentum Sampler was derived by looking at the first-order discretization of the backward SDE associated with our corruption, given in Eq.~\eqref{eq:backward_sde_linear}. Similarly to \cite{ncsnv3}, we can also consider the Ordinary Differential Equation (ODE) associated with this SDE:
\begin{gather}
        \mathrm{d}\vx_t = \left[\Cdot \valpha - \frac{1}{2}\frac{\mathrm{d}(\sigma^2_t)}{\mathrm{d}t}\nabla_{\vx_t} \log q_t(\vx_t) \right]\mathrm{d}t.
        \label{eq:probability_flow_ode}
\end{gather}
Using the same derivations as we did before, we can approximate $\Cdot\valpha, \nabla_{\vx_t}\log q_t(\vx_t)$ with our trained neural network and get a deterministic version of the Momentum Sampler. We name this sampler Probability Flow Momentum Sampler, following the naming convention of \citet{ncsnv3}. We detail our derivations in Section \ref{sec:app_prob_flow} of the Appendix.

\subsection{Scheduling}
\label{sec:scheduling}
The last piece of our framework is how to choose the corruption levels, i.e. the scheduling of the diffusion. For example, for Gaussian Blur, the scheduling decides how much time is spent in the diffusion in the very blurry, somewhat blurry and almost no blurry regimes. We provide a principled way of choosing the corruption levels for arbitrary corruption processes.

Let $\mathcal D_0$ the distribution of real images and $\mathcal D_1$ be a known distribution that we know how to sample from, e.g. a Normal Distribution. In the design phase of score-based modeling, the goal is to choose a set of intermediate distributions, $\{\mathcal D_t\}$, that smoothly transform images from $\mathcal D_0$ to samples from the distribution $\mathcal D_1$. Let $\Theta = \{\theta_1, \theta_2, ..., \theta_k, ... \}$ be the space of diffusion parameters, i.e. each $\theta_i$ corresponds to a distribution $\mathcal D_{\theta_i}$. In the case of blur for example, $\theta_i$ controls how much we blur the image. Let also $\mathcal M: \mathcal X \times \mathcal X \to \mathbb R$ be a metric that measures distances between distributions, e.g. $\mathcal M$ might be the Wasserstein Distance of distributions with support $\mathcal X$. 

We construct a weighted graph $G_\epsilon$ with the nodes being the distributions and the weights given by:
\begin{gather}
    w\left(\mathcal D_{\theta_i}, \mathcal D_{\theta_j}\right) = \begin{cases} \mathcal M\left(\mathcal D_{\theta_i}, \mathcal D_{\theta_j}\right), \quad \mathrm{if } \ \mathcal M\left(\mathcal D_{\theta_i}, \mathcal D_{\theta_j}\right) \leq \epsilon, \\ \infty, \quad \mathrm{otherwise}.\end{cases}
\end{gather}
For a fixed $\epsilon$, we \textit{choose the distributions that minimize the cost of the path between $\mathcal D_0$ and $\mathcal D_1$}. 
The parameter $\epsilon$ expresses the power of the best neural network we can train to reverse one step of the diffusion. If $\epsilon=\infty$, then for any metric $M$, the shortest path is to go directly from $D_0$ to $D_1$. However, it is impossible to denoise in one step a super noisy image. Hence, we need to go through many intermediate steps, which is forced by setting $\epsilon$ to a smaller value. As we increase the number of the candidate distributions we are getting closer to finding the geodesic between $D_0$ and $D_1$. However, the computational cost of the method increases since we need to estimate all the pairwise distances $\mathcal M(D_{\theta_i}, D_{\theta_j})$. In practice, we use a relatively small number of candidate distributions, e.g. $T=256$ and once the path is found, we do linear interpolation to extend to the continuous case.

\section{Experiments}
We evaluate our method in CelebA-$64$ and CIFAR-10. We show that by just changing the corruption mechanism (and using our framework for scheduling, learning and sampling) we significantly improve the FID score and reduce the sampling time. We use the architecture and the training hyperparameters from~\citet{ncsnv3} (full details can be found in the Appendix). 

For most of our experiments, we use Gaussian Blur as our primary corruption mechanism. To illustrate that our method works more generally, we also show results with masking which is discussed separately later.
Consistent with the description of our method, our deterministic corruptions are also paired with additive low magnitude noise. This is required by our theoretical results, otherwise the conditional log-likelihood, $\log q(x_t|x_0)$ would be undefined. We also find the addition of noise beneficial in practice (see Appendix \ref{subsec:noise_ablations} for ablation studies on the role of noise).

\vspace{-.2em}
\noindent \textbf{Scheduling.} We use the standard geometric scheduling for the noise~\citep{ncsn, ncsnv2, ncsnv3} and use the methodology described in Section~\ref{sec:scheduling} to select the blur levels. We use the Wasserstein distance as the metric $\mathcal M$ to measure how close are the different distributions. To clearly illustrate the switch to a different corruption, our diffusion has an initial stage that increases the noise (with no blur) and then we fix the noise (to a small value, e.g. $\sigma=0.1$) and change (using our scheduling framework) the amount of blur. Our diffusion spends less than $20\%$ of the total time in the initial stage that increases the noise. We ablate those choices in Section~\ref{subsec:sched_ablations} of the Appendix.

One important property of our framework is that the scheduling is dataset specific. Intuitively, the way we corrupt should depend on the nature of the data we are modelling. The interested reader can find the found schedulings for each dataset in Figure \ref{fig:schedulings} of the Appendix.

\vspace{-.2em}
\noindent \textbf{Results.}
We train our networks on CelebA-$64$ and CIFAR-10
using the found schedulings and the training objective of Eq. \eqref{eq:loss}. 
\begin{figure}[!ht]
    \centering
     \begin{subfigure}{0.48\textwidth}
         \centering
         \includegraphics[width=0.8\textwidth]{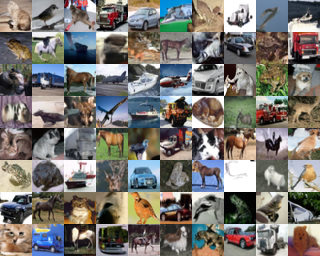}
     \end{subfigure}
     \begin{subfigure}{0.48\textwidth}
         \centering
         \includegraphics[width=0.8\textwidth]{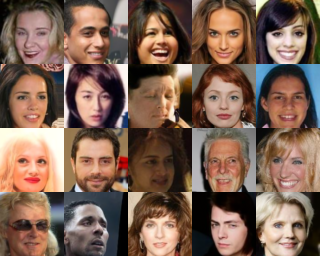}
     \end{subfigure}
    \caption{Uncurated samples from our trained models on CIFAR-10 (left) and CelebA (right).}
    \label{fig:uncurated_samples}
\end{figure}
We start by showing uncurated samples of our models in Figure~\ref{fig:uncurated_samples}. The generated images have high diversity and fidelity in both datasets. We compare the FID~\citep{fid} obtained by our method on CelebA with many natural baselines that use any of the VE, VP or subVP SDEs. Specifically, we compare against DDPM~\citep{ddpm} that uses the VP SDE, DDIM~\citep{ddim} that uses the same model but with a different sampler, DDPM++~\citep{ddpm++} that is the state-of-the-art model for VP-SDE and the NCSN++ models~\citep{ncsnv3} trained with the VE and subVP-SDEs. For a fair comparison, we only use reported numbers in published papers for the baselines and we do not rerun them ourselves. \textit{Our model achieves state-of-the-art FID score, $1.85$, in CelebA,} outperforming all the other methods. The results are summarized in Table \ref{tab:results_celeba}. 
For CIFAR-10, we obtain FID score $3.86$ with our Probability Flow Momentum Sampler and $3.91$ with our Momentum Sampler. This FID score is competitive, yet not state-of-the-art. Specifically, we outperform the NCSN++ (VE SDE) baseline with Reverse Diffusion (similar to our SDE sampler -- achieves FID $4.79$) and with Probability Flow Ode sampling (theirs FID: $10.54$). However, it is known~\citep{karras2022elucidating} that diffusion models with VP SDEs usually do better in CIFAR-10, e.g. DDPMs achieves FID $3.17$.

Our method is superior in \textit{sampling time}, for both CIFAR-10 and CelebA. 
Figure~\ref{fig:speed_plot} shows how FID changes based on the Number of Function Evaluations (NFEs). Our method requires significant less steps to achieve the same or better quality than NCSN++ (VE SDE)~\citep{ncsnv3}, using  the same architecture and training hyperparameters (FID values taken from~\citep{tdas}).

\begin{table}[!ht]
\centering
\begin{minipage}[c]{.44\textwidth}
\setlength{\tabcolsep}{1pt}
    \centering 
    \small{
    \begin{tabular}{lc} 
    Model & FID  \\ \midrule
    DDPM (VP SDE) ~\citep{ddpm} &  3.26 \\ %
    DDIM (VP SDE)~\citep{ddim} & 3.51 \\ %
    DDPM++ (VP SDE)~\citep{ddpm++}  & 1.90 \\ %
    NCSN++ (subVP-SDE)~\citep{ncsnv3}  & 3.95 \\ %
    NCSN++ (VE SDE)~\citep{ncsnv3} & 3.25 \\ %
    \textbf{Ours} (VE SDE + Blur) & $\mathbf{1.85}$
    \end{tabular}
    \caption{FID results on CelebA-$64$.}
    \label{tab:results_celeba}
    }
\end{minipage}
\hspace{1.5em}
\begin{minipage}[c]{.44\textwidth}
\begin{figure}[H]
    \centering
    \includegraphics[width=.7\textwidth, clip, trim=0 0 0 18]{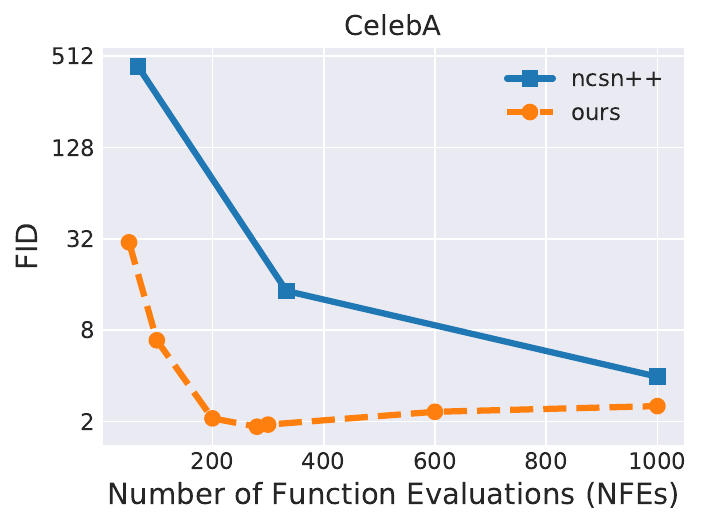}
    \vspace{-.5em}
    \caption{FID versus NFEs (CelebA-64).}
    \label{fig:speed_plot}
\end{figure}
\end{minipage}
\end{table}

\noindent \textbf{Ablation Study for Sampling.} For all the results we presented so far, we used the Momentum Sampler that we introduced in Section \ref{sec:sampling} and Algorithm \ref{alg:momentum_sampler}. In this section, we ablate the choice of the sampler. Specifically, we compare with the intuitive Naive Sampler described in Algorithm \ref{alg:naive_sampler}. We show that the choice of the sampler has a dramatic effect in the quality and the diversity of the generated images. Results are shown in Figure \ref{fig:sampling_comparison}. The images from the Naive Sampler seem repetitive and lack details. This is reflected in the poor FID score. The Momentum Sampler leads to images with greater variety and detail, dramatically improving FID from $27.82$ to $1.85$.

\begin{figure}[!ht]
    \centering
     \begin{subfigure}{0.49\textwidth}
         \centering
         \includegraphics[width=0.85\textwidth]{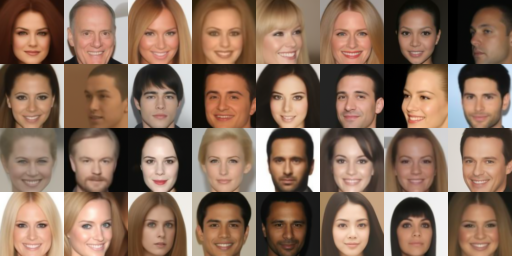}
         \caption{Naive Sampler (uncurated). FID: 27.82.}
         \label{fig:sampling_ablation_1}
     \end{subfigure}
     \begin{subfigure}{0.49\textwidth}
         \centering
         \includegraphics[width=0.85\textwidth]{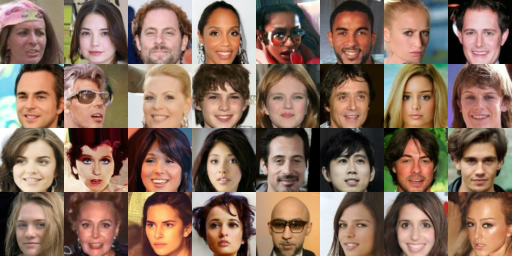}
         \caption{Momentum Sampler (uncurated). FID: \textbf{1.85}.}
         \label{fig:sampling_ablation_2}
     \end{subfigure}
    \caption{\small{Effect of sampling method on the quality of the generated samples. The images from the Naive Sampler \eqref{fig:sampling_ablation_1} seem repetitive and lack details. Momentum Sampler \eqref{fig:sampling_ablation_2} dramatically improves the sampling quality and the FID score.}}
    \label{fig:sampling_comparison}
\end{figure}

\vspace{-0.4cm}

\noindent \paragraph{Training Objective.} Concurrent works~\citep{hoogeboom2022blurring, bansal2022cold} that also consider different corruption processes have used as their loss a simple objective: given an input image, they predict the residual to the clean. As explained in Section \ref{sec:training_objective}, this is equivalent to optimizing for an upper-bound of the Soft Score Matching objective (which is minimizing the error to the score-function, see Theorem \ref{dsm_theorem}). We show that if we use our exact pipeline (model, scheduling, sampler, training hyperparameters and so on) and we replace our Soft Score matching Loss with the loss introduced in \citet{bansal2022cold}, FID score increases from $1.85$ to $5.91$. This experiment shows the effectiveness of the Soft Score Matching objective.

\noindent \textbf{Ablation Studies for Scheduling.} We perform extensive ablations on the scheduling to understand the role and importance of noise in the framework and whether our proposed scheme outperforms other natural baselines. The results are detailed in section \ref{subsec:sched_ablations} of the Appendix. Our main findings are as follows: i) the Momentum Sampler works even if noise and blur are changing simultaneously (i.e. for schedulings with non-fixed noise), ii) lowering the maximum value of noise leads to important performance degradation -- for very small noise the method completely fails and iii) our found scheduling outperforms significantly (baseline FID: $8.35$, ours: $1.85$) a natural baseline that sets blur parameters such that MSE between the corrupted and the clean image decays in the same rate for Gaussian Denoising and Soft Diffusion.

\begin{figure}[h]
    \centering
    \vspace{-.5em}
      \includegraphics[width=0.8\textwidth]{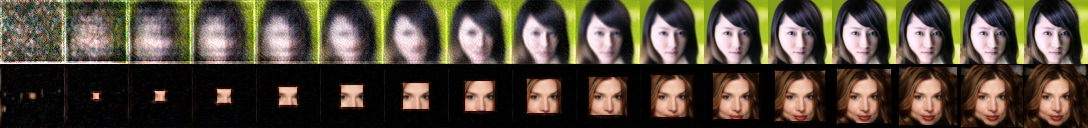}
    \caption{\small{Conditional means $\E[\vx_0|\vx_t]$ predictions of our blur/masking models, at different diffusion times. }}%
    \label{fig:conditional_means}\vspace{-1.0em}
\end{figure}

\noindent \paragraph{Masking Diffusion Models.} To show the generality of our framework, we also train models with (discrete) masking diffusion paired with noise. Figure 1 (top 2 rows) shows the forward and the (learned) reverse process for blur on the left and masking on the right. We train the model with our Soft Score Matching objective on CelebA. Unconditional samples from the model trained with masking can be found in Figure \ref{fig:appendix_masking} of the Appendix. In Figure \ref{fig:conditional_means}, we show the predictions of our two trained models (blur and masking) for the conditional mean, $\E[\vx_0|\vx_t]$, at different times of the diffusion. Soft Score Matching trains the model to make a prediction that matches the real images in the filtered space. Hence, given masked images the masking model is incentived to predict right only the observed noisy region. As diffusion time $t$ becomes smaller (cleaner images), the observed region grows and the model predicts bigger windows.
Although it is interesting that we can train Masking Diffusion models, there are several limitations compared to Blur Diffusion (and even Gaussian Diffusion). We observe that these models are very slow to sample from: with $1024$ sampling steps, FID is $30.92$ while with $4096$, FID improves to $12.37$.

\section{Related Work}
We showed that by just changing the corruption mechanism, we observe important computational benefits. Reducing the number of function evaluations for sample generation with diffusion models is an active area of research. \citet{jolicoeur2021gotta, liu2022pseudo, ddim} propose more efficient samplers for diffusion models. \citet{latent_diffusion, daras_dagan_2022score} train diffusion models on low-dimensional latent spaces. \citet{xiao2021tackling} combine diffusion models with Generative Adversarial Networks (GANs) to allow for bigger denoising steps. \citet{ryu2022pyramidal, ho2022cascaded} generate high-resolution images progressively, from coarse to fine quality. \citet{salimans2022progressive} train progressively a student network that mimics the teacher diffusion model with fewer sampling steps. We note that all these works are orthogonal to ours and therefore can be used in combination with our framework for even faster sampling.

On scheduling, there is closely related work by \citet{bao2022analyticdpm}. The authors find a closed form solution (w.r.t. the score function) for the variance of the reverse SDE for Gaussian Diffusion. Then, they select a noise scheduling that minimizes the KL along the path from the initial to the final distribution. In our work, we use Wasserstein distances and consider general corruption processes for which it is unclear whether such a closed form solution exists. Instead, we estimate the distances in a data-driven way. This allows us to schedule arbitrary diffusion processes in a principled way.

There is significant recent~\citep{anonymous2022diffusion} and concurrent work~\citep{rissanen2022generative,bansal2022cold,hoogeboom2022blurring} that proposes diffusion with other degradations. These concurrent works have significant differences since they use different loss functions and sampling mechanisms. 
Soft Score Matching experimentally outperforms (under the exact same setting) the loss functions used in the concurrent works (ours FID: $1.85$, theirs: $5.91$). Our (blur) models obtain state-of-the-art FID for CelebA (FID 1.85). For CIFAR10, we outperform (FID: 3.86) Gaussian diffusion with Variance Exploding SDE~\citep{ncsnv3}. \cite{hoogeboom2022blurring} also use blurring (but with Variance Preserving SDE) to further push the CIFAR-10 performance to FID 3.17. These advancements show that there are multiple diffusions with promising potential.

\section{Conclusions and Future Work}
We presented a framework to train and sample from diffusion models that reverse  general corruption processes. We showed that by changing the corruption process, we can get significant sample quality improvements and computational benefits. This work opens several future research directions. 
For example, it is possible to optimize or learn the corruption process for solving a specific type of inverse problem. It is also worth exploring if mixing different corruptions (blur, noise, masking, etc.) improves performance or sampling quality. Finally, it is important to understand the role of noise, from both a theoretical and practical standpoint.

\subsubsection*{Acknowledgements}
This work was done during an internship at Google Research.
This research has been partially supported by NSF Grants CCF
1763702, 1934932, AF 1901281, 2008710, 2019844 the
NSF IFML 2019844 award as well as research gifts by
Western Digital, WNCG and MLL, computing resources
from TACC and the Archie Straiton Fellowship.

The authors would like to thank Constantinos Daskalakis, Yuval Dagan, Sergey Ioffe and José Lezama for useful discussions. We would also like to thank Vikram Voleti and Simon Welker for useful comments on an earlier version of our preprint.

\bibliography{iclr2023_conference}
\bibliographystyle{iclr2023_conference}

\appendix

\section{Appendix}

\subsection{Proofs}

\begin{namedtheorem}[\ref{dsm_theorem}]
Let $q_0, q_t$ be two distributions in $\mathbb R^n$. Assume that all the conditional distributions, $q_t(\vx_t|\vx_0)$, are supported and differentiable in $\mathbb R^n$. Let:
\begin{gather}
    J_1(\theta) = \frac{1}{2}\mathbb E_{\vx_t \sim q_t}\left[\left|\left| \vs_{\theta}(\vx_t) - \nabla_{\vx_t} \log q_t(\vx_t)\right|\right|^2\right],
\end{gather}
\begin{gather}
    J_2(\theta) = \frac{1}{2}\mathbb E_{(\vx_0, \vx_t) \sim q_0(\vx_0) q_t(\vx_t|\vx_0)}\left[\left| \left| \vs_{\theta}(\vx_t) - \nabla_{\vx_t}\log q_t(\vx_t|\vx_0) \right|\right|^2\right].
\end{gather}
Then, there is a universal constant $C$ (that does not depend on $\theta$) such that: $J_1(\theta) = J_2(\theta) + C$.
\end{namedtheorem}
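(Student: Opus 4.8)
The plan is to follow the classical denoising score matching argument of Vincent (2011) essentially verbatim, the only twist being that we must keep track of where the hypotheses (full support and differentiability of the conditionals $q_t(x_t\mid x_0)$) are actually used. First I would expand both squared norms. Writing $J_1(\theta) = \frac12\,\mathbb E_{x_t\sim q_t}\big[\,\|s_\theta(x_t)\|^2 - 2\,\langle s_\theta(x_t),\nabla_{x_t}\log q_t(x_t)\rangle + \|\nabla_{x_t}\log q_t(x_t)\|^2\,\big]$ and similarly for $J_2(\theta)$ with $q_t(x_t\mid x_0)$ in place of $q_t(x_t)$ and an extra expectation over $x_0\sim q_0$, I note two things: (i) the $\|s_\theta(x_t)\|^2$ term is identical in $J_1$ and $J_2$, because $\mathbb E_{(x_0,x_t)\sim q_0 q_t(\cdot\mid x_0)}\|s_\theta(x_t)\|^2 = \mathbb E_{x_t\sim q_t}\|s_\theta(x_t)\|^2$ by the definition of the marginal $q_t(x_t)=\int q_0(x_0)q_t(x_t\mid x_0)\,\dx$; and (ii) the pure $\|\nabla \log\|^2$ terms do not depend on $\theta$, so they are absorbed into the constant $C$. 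Hence it suffices to show the cross terms agree:
\begin{gather}
    \mathbb E_{x_t\sim q_t}\big[\langle s_\theta(x_t),\nabla_{x_t}\log q_t(x_t)\rangle\big]
    = \mathbb E_{(x_0,x_t)\sim q_0 q_t(\cdot\mid x_0)}\big[\langle s_\theta(x_t),\nabla_{x_t}\log q_t(x_t\mid x_0)\rangle\big].
\end{gather}

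The key computation is the standard one: on the right-hand side, write the expectation as an integral $\int\!\!\int q_0(x_0)\,q_t(x_t\mid x_0)\,\langle s_\theta(x_t),\nabla_{x_t}\log q_t(x_t\mid x_0)\rangle\,\dx\,\dtheta$ (abusing the $\dtheta$ macro for $\D x_0$ — I would actually just write $\mathrm{d}x_0$), and use the identity $q_t(x_t\mid x_0)\nabla_{x_t}\log q_t(x_t\mid x_0) = \nabla_{x_t} q_t(x_t\mid x_0)$, which is where differentiability of the conditional is needed and where full support guarantees the $\log$ and the division are well defined. Then swap the order of integration (Fubini) to get $\int \big\langle s_\theta(x_t),\ \nabla_{x_t}\!\int q_0(x_0) q_t(x_t\mid x_0)\,\mathrm{d}x_0 \big\rangle\,\dx = \int \langle s_\theta(x_t),\nabla_{x_t} q_t(x_t)\rangle\,\dx$, using that the marginal is $q_t(x_t)=\int q_0(x_0) q_t(x_t\mid x_0)\,\mathrm{d}x_0$. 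Finally rewrite $\nabla_{x_t} q_t(x_t) = q_t(x_t)\nabla_{x_t}\log q_t(x_t)$ (again using full support of the marginal, which follows from full support of the conditionals) to recognize the left-hand side $\mathbb E_{x_t\sim q_t}[\langle s_\theta(x_t),\nabla_{x_t}\log q_t(x_t)\rangle]$. Collecting terms, $C = \frac12\mathbb E_{x_t\sim q_t}\|\nabla_{x_t}\log q_t(x_t)\|^2 - \frac12\mathbb E_{(x_0,x_t)}\|\nabla_{x_t}\log q_t(x_t\mid x_0)\|^2$, which is manifestly independent of $\theta$.

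The main obstacle is not conceptual but analytic: the interchange of differentiation and integration (pulling $\nabla_{x_t}$ out of $\int q_0 q_t(\cdot\mid x_0)\,\mathrm{d}x_0$) and the application of Fubini both require integrability side conditions that the theorem statement only gestures at with "mild technical conditions." In a fully rigorous write-up I would either (a) add an explicit dominated-convergence hypothesis, e.g. local integrability of $x_0\mapsto \sup_{x_t\in K}\|\nabla_{x_t} q_t(x_t\mid x_0)\|$ on compact $K$, plus integrability of the cross terms, or (b) simply remark, as the paper does, that for the Gaussian conditionals $q_t(x_t\mid x_0)=\mathcal N(x_t; C_t x_0,\sigma_t^2 I)$ actually used in the sequel these conditions hold automatically, since the Gaussian density and its gradient are smooth and have Gaussian tails uniformly on compacta. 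I would state the proof at the level of rigor of Vincent (2011), flagging the interchange as the only nontrivial point, and defer the measure-theoretic bookkeeping to the reader or to the Gaussian special case.
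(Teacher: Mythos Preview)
Your proposal is correct and follows essentially the same argument as the paper's proof: expand both squared norms, absorb the $\theta$-independent terms into constants, and show the cross terms coincide via the identity $q_t(x_t\mid x_0)\nabla_{x_t}\log q_t(x_t\mid x_0)=\nabla_{x_t}q_t(x_t\mid x_0)$ followed by Fubini and recognition of the marginal. Your additional discussion of the analytic side conditions (interchange of $\nabla$ and $\int$, Fubini) is more careful than the paper, which simply carries out the formal calculation without comment.
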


The proof of this Theorem is following the calculations of \citet{dsm}.

\begin{proof}[Proof of Theorem \ref{dsm_theorem}]
\begin{gather}
    J_1(\theta) = \frac{1}{2}\mathbb E_{\vx_t \sim q_t}\left[\left|\left| \vs_{\theta}(\vx_t)\right| \right|^2 - 2 \vs_{\theta}(\vx_t)^T\nabla_{\vx_t}\log q_t(\vx_t) + ||\nabla \vx_t \log q_t(\vx_t)||^2\right] \\
    = \frac{1}{2} \mathbb E_{\vx_t\sim q_t}\left[ || \vs_{\theta}(\vx_t)||^2\right] -  \mathbb E_{\vx_t\sim q_t}\left[s_{\theta}(\vx_t)^T\nabla_{\vx_t}\log q_t(\vx_t)\right] + C_1.
\end{gather}

Similarly,
\begin{gather}
    J_2(\theta) = \frac{1}{2} \mathbb E_{\vx_t\sim q_t}\left[ || \vs_{\theta}(\vx_t)||^2\right] -   \mathbb E_{(\vx_0, \vx_t) \sim q_0(\x_0) q_t(\vx_t|\vx_0)} \left[\vs_{\theta}(\vx_t)^T\nabla_{\vx_t}\log q_t(\vx_t|\vx_0)\right] + C_2.
\end{gather}

It suffices to show that:
\begin{gather}
    \mathbb E_{\vx_t\sim q_t}\left[ s_{\theta}(\vx_t)^T\nabla_{\vx_t}\log q_t(\vx_t)\right] = \mathbb E_{(\vx_0, \vx_t) \sim q_0(\vx_0) q_t(\vx_t|\vx_0)} \left[s_{\theta}(\vx_t)^T\nabla_{\vx_t}\log q_t(\vx_t|\vx_0)\right].
\end{gather}

We start with the second term.
\begin{gather}
    \mathbb E_{(\vx_0, \vx_t) \sim q_0(\vx_0) q_t(\vx_t|\vx_0)} \left[\vs_{\theta}(\vx_t)^T\nabla_{\vx_t}\log q_t(\vx_t|\vx_0)\right] \nonumber \\ = \int_{\vx_0}\int_{\vx_t}q_0(\vx_0) q_t(\vx_t|\vx_0) \vs_{\theta}(\vx_t)^T \nabla_{\vx_t}\log q_t(\vx_t|\vx_0)\mathrm{d}{\vx_t}\mathrm{d}{\vx_0} \\
    = \int_{\vx_0}\int_{\vx_t} \vs_{\theta}^T(\vx_t) \left( q_0(\vx_0) q_t(\vx_t|\vx_0) \nabla_{\vx_t}\log q_t(\vx_t|\vx_0)\right)\mathrm{d}{\vx_t}\mathrm{d}{\vx_0} \\
    =  \int_{\vx_0}\int_{\vx_t}\vs_{\theta}^T(\vx_t) \left( q_0(\vx_0) q_t(\vx_t|\vx_0) \frac{1}{q_t(\vx_t|\vx_0)} \nabla_{\vx_t} q_t(\vx_t|\vx_0)\right)\mathrm{d}{\vx_t}\mathrm{d}{\vx_0} \\
    = \int_{\vx_0}\int_{\vx_t} \vs_{\theta}^T(\vx_t) \left( q_0(\vx_0)  \nabla_{\vx_t} q_t(\vx_t|\vx_0)\right)\mathrm{d}{\vx_t}\mathrm{d}{\vx_0} \\ 
    = \int_{\vx_t}\int_{\vx_0} \vs_{\theta}^T(\vx_t) \left( q_0(\vx_0)  \nabla_{\vx_t} q_t(\vx_t|\vx_0)\right)\mathrm{d}{\vx_0}\mathrm{d}{\vx_t} \\ 
    = \int_{\vx_t} \vs_{\theta}^T(\vx_t) \left(\int_{\vx_0}q_0(\vx_0) \nabla_{\vx_t} q_t(\vx_t|\vx_0) \mathrm{d}{\vx_0}\right)\mathrm{d}{\vx_t} \\ 
    = \int_{\vx_t}\vs_{\theta}^T(\vx_t) \left(\int_{\vx_0} \nabla_{\vx_t} \left(q_0(\vx_0)  q_t(\vx_t|\vx_0)\right) \mathrm{d}{\vx_0}\right)\mathrm{d}{\vx_t} \\
    = \int_{\vx_t}\vs_{\theta}^T(x_t) \left(\nabla_{\vx_t}\left(\int_{\vx_0} q_0(\vx_0) q_t(\vx_t|\vx_0) \mathrm{d}{\vx_0}\right)\right)\mathrm{d}{\vx_t} \\
    = \int_{\vx_t}\vs_{\theta}^T(\vx_t) \nabla_{\vx_t}q_t(\vx_t)\mathrm{d}{\vx_t} \\ 
    = \int_{\vx_t}q_t(\vx_t) \vs_{\theta}^T(\vx_t) \nabla_{\vx_t}\log q_t(\vx_t)\mathrm{d}{\vx_t} \\
    = \mathbb E_{\vx_t \sim q_t(\vx_t)}\left[ \vs_{\theta}^T(\vx_t)\nabla_{\vx_t}\log q_t(\vx_t)\right].
\end{gather}

\end{proof}

\section{Probability Flow ODE} \label{sec:app_prob_flow}In the main text, we derived our Momentum Sampler by analyzing the Backward SDE associated with our corruption process. Inspired by the works of \citet{ncsnv3, maoutsa2020interacting}, we also consider deterministic sampling that is derived by looking at the ODE that describes our diffusion. Particularly, the ODE:

\begin{gather}
        \mathrm{d}\vx_t = \left[\vf(\vx_t, t)  - \frac{1}{2}g^2(t)\nabla_{\vx_t} \log q_t(\vx_t) \right]\mathrm{d}t,
        \label{eq:probability_flow_ode_appendix}
\end{gather}
has the same marginal distributions~\citep{anderson, maoutsa2020interacting, ncsnv3, chen2018neural} with the Backward SDE:

\begin{gather}
    \mathrm{d}\vx_t = \left[ \vf(\vx_t, t) - g^2(t)\nabla_{\vx_t} \log q_t(\vx_t)\right]\mathrm{d}t + g(t)\mathrm{d}\bar \vw.
\end{gather}

For our case, Eq. \eqref{eq:probability_flow_ode_appendix} becomes:
\begin{gather}
        \mathrm{d}\vx_t = \left[\Cdot \valpha - \frac{1}{2}\frac{\mathrm{d}(\sigma^2_t)}{\mathrm{d}t}\nabla_{\vx_t} \log q_t(\vx_t) \right]\mathrm{d}t.
\end{gather}

The first-order discretization of this ODE is given below:
\begin{gather}
        \vx_{t - \Delta t} - \vx_{t} = (\mC_{t - \Delta t} - \mC_t)\valpha - \frac{(\sigma_{t - \Delta t}^2 - \sigma_t^2)}{2}\nabla_{\vx_t}\log q_t(\vx_t).
\end{gather}

We estimate $\mathbf{C}_t \valpha$ and $\nabla_{\vx_t} \log q_t(\vx_t)$ with our neural network and we get the Neural ODE~\citep{chen2018neural}:

\begin{gather}
    \qquad \Delta \vx_t = \vx_{t - \Delta t} - \vx_t =   \\\nonumber
    = \left(\mC_{t - \Delta t} - \mC_{t} \right)\left( 
    \tikzmarknode{xhat}{\highlight{blue}{$\vphi_{\theta}(\vx_t|t) + \vx_t$}}\right) - \frac{1}{2}\left(\frac{\sigma_{t - \Delta t}^2 - \sigma_{t}^2}{\sigma_t^2}\right)\left(\mC_t\left(\tikzmarknode{xdhat}{\highlight{blue}{$\vphi_{\theta}(\vx_t|t) + \vx_t$}}\right) - \vx_t\right),
\end{gather}
which is the update rule of our Probability Flow ODE Momentum Sampler.

We note that our simple discretization is not the only way to solve the ODE of Eq. \eqref{eq:probability_flow_ode_appendix}. We can use more sophisticated, e.g. see \citet{dormand1980family}.

\section{Schedulings}
We use our framework to select the blur levels in an unsupervised way. For the estimation of Wasserstein distances, we use the \texttt{ott-jax}~\citep{otjax} software package. We start with $256$ different blur levels and we tune $\epsilon$ such that the shortest path contains $32$ distributions. We then use linear interpolation to extend to the continuous case. Full experimental details can be found in the Appendix. These choices seem to work well in practice, but further optimization could be made in future work.

Figure \ref{fig:schedulings} shows the found schedulings for the CelebA and the CIFAR-10 datasets. 
Notice that the scheduling is slightly different between the two datasets -- the diffusion depends on the nature of the data we are trying to model. We underline that the parameters for the blur are selected without any supervision, by solving the optimization problem we defined in Section \ref{sec:scheduling}.

\begin{figure}[ht]
    \centering
     \begin{subfigure}{0.4\textwidth}
         \centering
         \includegraphics[width=\textwidth]{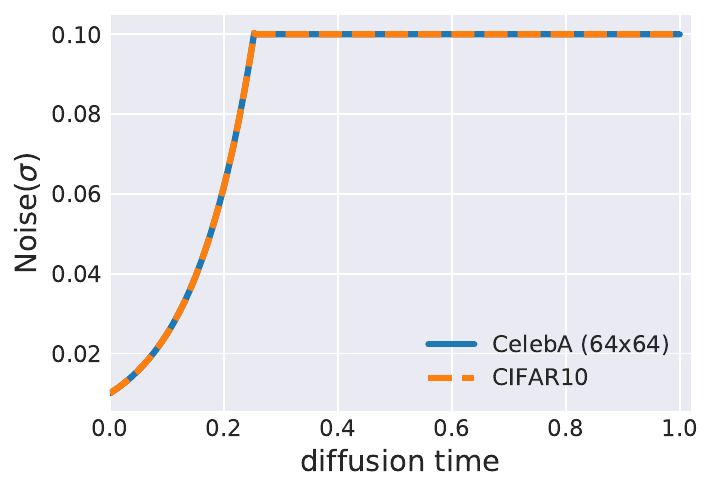} %
         \caption{Noise Scheduling}
         \label{fig:noise_sched_celeba}
     \end{subfigure}
     \begin{subfigure}{0.4\textwidth}
         \centering
         \includegraphics[width=\textwidth]{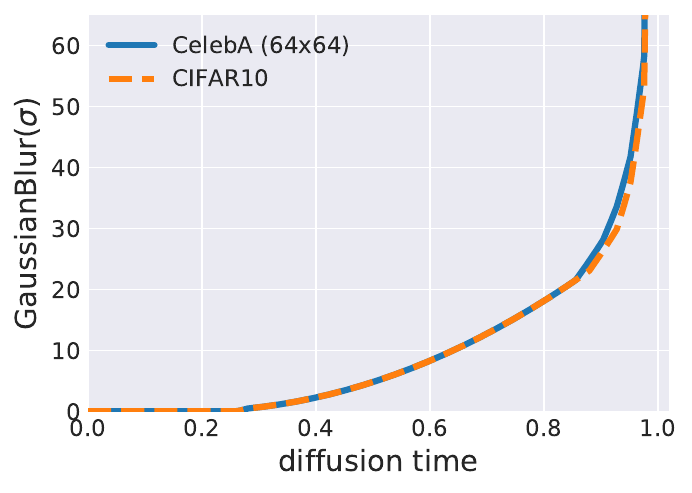} %
         \caption{Blur Scheduling CelebA}
         \label{fig:blur_sched_celeba}
     \end{subfigure}
    \caption{Diffusion scheduling for CelebA-$64$ and CIFAR-10. The blur corruption levels are selected without supervision to minimize the sum of the Wasserstein distances between consecutive distributions. Notice that the scheduling is slightly different between the two datasets -- the diffusion depends on the nature of the data we are trying to model. The support of the Gaussian blur kernel was set to $65\times65$ and $161\times161$ for CIFAR-10 and CelebA-64 datasets respectively. }
    \label{fig:schedulings}
\end{figure}

\section{Training Details}
\paragraph{Hyperparameters.} For our trainings, we use Adam optimizer with learning rate $2e-4$, $\beta_1=0.9$, $\beta_2=0.999, \epsilon=1e-8$. We additionally use gradient clipping for gradient norms bigger than $1$. For the learning rate scheduling, we use $5000$ steps of linear warmup. We use batch size $128$ and we train 
for $1-2$M iterations (based on observed FID performance).

\paragraph{Blur parameters.} For the blurring operator, we use Gaussian blur with fixed kernel size and we vary the variance of the kernel. For CelebA-$64$, we keep the kernel half size fixed to $80$ and we vary the standard deviation from $0.01$ to $23$. For CIFAR-10, we keep the kernel half size fixed to $32$ and we vary the standard deviation from $0.01$ to $18$. For both datasets, we implement blur with zero-padding. We chose the final blur level such that the final distribution is easy to sample from. In both cases, the final distribution becomes noise on top of (almost) a single color. 

\paragraph{Architecture.} We use the architecture of \citet{ncsnv3} without any changes.

\paragraph{Training Objective} For all our experiments, we scale the loss at level $t$ with $w(t) = 1/\sigma_t^2$ as in \citet{ncsn, ncsnv2, ncsnv3}.

\paragraph{Compute and Training Time} We train our models on $16$  v2-TPUs. Our blur models on CelebA had an average speed of $6$ iterations per second. For CIFAR-10, the average speed was $11$ iterations per second We note that there is no overhead over the NCSN++ paper other than projecting to the measurements space, which can be done very efficiently for both blur and masking.

\paragraph{Evaluation} We keep one checkpoint every $10000$ steps and we keep the best model among the kept checkpoints based on the obtained FID score. We use $50000$ samples to evaluate the FID, as it is typically done in prior work.

\section{Limitations and things that did not work}
Our method has several limitations. First, it requires the diffusion operator to be known. This is not always the case, e.g. see \citet{peebles2022learning} where diffusion is applied to \textit{checkpoints} of different models. Another limitation is that our framework does not offer any guidance on \textit{which} diffusion operators are actually more or less useful for learning the data distribution. Particularly, we already showed that blurring is a much more powerful diffusion mechanism than masking, in the sense that it leads to better FID scores and faster generations. It is yet to be seen whether blurring is going to be outperformed by some other corruption method. Our method also only concerns linear diffusions (however, the extension to non-linear is relatively straightforward).

On the theoretical side, our method has also some shortcomings. First, it only intuitively explains why the reparametrization to the Denoising Score Matching is needed. Second, since our method is based on the Denoising Score Matching, it only works when the conditional log-likelihood is defined everywhere. There are distributions for which such condition is not satisfied, but still, can be learned (to some extent) with heuristic methods~\citep{bansal2022cold}.

On the practical side, we believe that our objective, Soft Score Matching, sometimes leads to slower sampling compared to the simpler objective of predicting the clean image. For example, for masking, since the model is only penalized in the observed region, there is no incentive in expanding this region. Hence, to achieve smooth transition between different masking levels we need to run many steps.

Experimentally, we tried using our framework with even less stochasticity, but it did not work, e.g. see \ref{fig:ablation_noise_magn}. It would be interesting to understand better what is causing the failure and also what is the proper amount of randomness required at each diffusion step.

\section{Additional Results}

\subsection{Scheduling Ablations}
\label{subsec:sched_ablations}
\subsubsection{Ablation Studies for noise}
\label{subsec:noise_ablations}
In the experiments of the main paper, our diffusion involves an initial stage where only noise is added. Then noise is fixed and the images are getting corrupted by the deterministic operator (e.g. blur or masking). In this section, we show two ablations regarding the noise.

\paragraph{Magnitude of noise.} In this ablation study, we still keep the noise fixed for a significant part of the diffusion, but we ablate the magnitude of the noise.
Specifically, we attempt to study to what extent noise is needed in order to learn to reverse corruption processes with Soft Score Matching. We train two additional models on CelebA-64 where we attempt to decrease the maximum noise to a lower value. The corruptions for both models involves an initial stage where noise grows geometrically rate from the initial value ($0.01$) to the maximum value. Then, all the models keep the noise fixed at their maximum value for the rest of the diffusion. We use the following maximum values: i) $\sigma_{\mathrm{max}}=0.1$ (model used in the paper), ii) $\sigma_{\mathrm{max}}=0.05$, and iii) $\sigma_{\mathrm{max}}=0.025$. Unconditional samples from the two ablation models are shown in Figure \ref{fig:ablation_noise_magn}. As shown, both models fail to produce realistic samples. The quality of samples deteriorates significantly as the noise decreases -- the samples from the ablation models are significantly worse than the ones produced by our state-of-the-art model (see Figure \ref{fig:uncurated_samples} (right)). We want to underline that this is not a conclusive study. It might be the case that with different hyperparameters one can make Soft Score Matching work with lower values of noise. For example, we might need to tune the weights $w(t)$ since for the ablations (and the state-of-the-art model), we use $w(t) = 1/\sigma_t^2$ (as in \cite{ncsn, ncsnv2, ncsnv3}) which might be causing instabilities for low values of noise~\citep{nichol2021improved}. 

\begin{figure}
    \centering
    \subcaptionbox{Images generated with model trained with $\sigma_{\mathrm{max}} = 0.025$}{\includegraphics[width=0.45\textwidth]{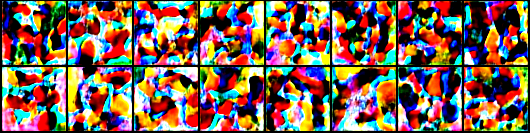}}
    \subcaptionbox{Images generated with model trained with $\sigma_{\mathrm{max}} = 0.05$}{\includegraphics[width=0.45\textwidth]{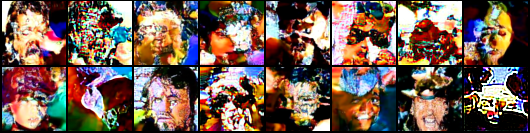}}
    \caption{Ablation study for the magnitude of noise.}
    \label{fig:ablation_noise_magn}
\end{figure}

\paragraph{Noise changing throughout the diffusion.} We also train a model where noise and blur are changing simultaneously throughout the diffusion. This is a sanity check to verify that our framework (learning and sampling) still works when the model needs to deblur and denoise at the same time. For the noise scheduling, we simply use geometric scheduling from $0.1$ to $0.01$. We keep the blur parameters the same with the state-of-the-art model, i.e. we use the blur parameters shown in Figure \ref{fig:schedulings}. This model achieves a competitive FID score, $4.31$. This score could probably be further improved (by jointly selecting the blur and the noise scheduling with our framework), but this is beyond the scope of this ablation.

\subsubsection{Ablation Study for Blur}
\label{subsec:blur_ablations}
For all our experiments so far, we chose the blur corruption levels based on the scheduling method we described in Section \ref{sec:scheduling}. We show the benefits of our approach by comparing to a natural baseline for selecting the diffusion levels. For this natural baseline, we use the scheduling of Variance Exploding (VE) as guidance. Specifically, we choose the blur parameters such that the MSE between the corrupted image and the clean image decays with the same rate for the Gaussian Denoising Diffusion and our Soft Diffusion (blur and low magnitude noise). Formally, let $\{q_t'\}_{t=0}^{1}$ be the (noisy) distributions used in \citet{ncsnv3} for the Variance Exploding (VE) SDE and let $\{q_t\}_{t=0}^{1}$ the blurry (and noisy) distributions we want to select. At level $t$, we choose the blur parameters such that:
\begin{gather}
    \frac{\mathbb E_{(\vx_0, \vx_t) \sim q_t(\vx_t|\vx_0) q_0(\vx_0)} \left[ ||\vx_0 - \vx_t ||^2\right]}{\mathbb E_{(\vx_0, \vx_1) \sim q_1(\vx_1|\vx_0) q_0(\vx_0)} \left[ ||\vx_0 - \vx_1||^2\right]} = \frac{\mathbb E_{(\vx_0, \vx_t) \sim q'_t(\vx_t|\vx_0) q_0(\vx_0)} \left[ ||\vx_0 - \vx_t ||^2\right]}{\mathbb E_{(\vx_0, \vx_1) \sim q'_1(\vx_1|\vx_0) q_0(\vx_0)} \left[ ||\vx_0 - \vx_1||^2\right]}.
\end{gather}
We retrain on CelebA using this natural baseline method for selecting the diffusion parameters. For a fair comparison, we keep the architecture and all the hyperparameters the same and we only ablate the scheduling of the blur. \textit{We measure FID for both the trained model with the baseline scheduling and we observe it increases from $1.85$ to $8.35$}. Apart from this large deterioration in performance, the baseline model obtains its best FID score after $2000$ steps, while with our scheduling we only need $280$ steps to obtain the best FID. This experiment shows that the choice of scheduling is really important for the model performance but for also the computational requirements of the sampling.

\subsection{CIFAR-10 Speed Plot}
\begin{figure}[!ht]
    \centering
         \centering
         \includegraphics[width=.45\textwidth]{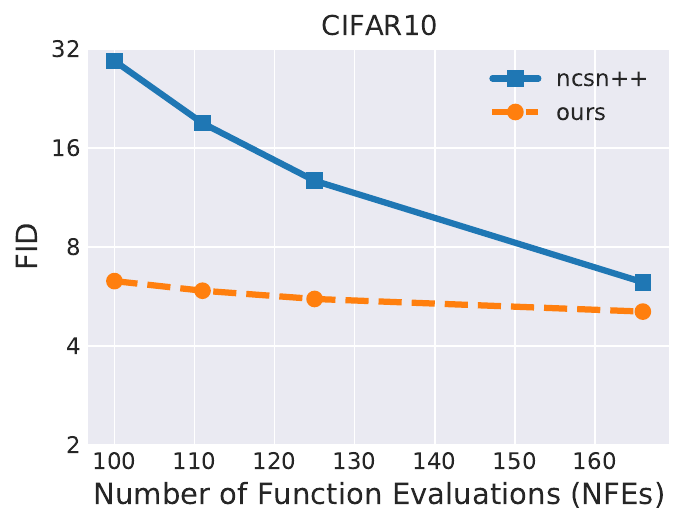}
     \caption{Demonstration of how FID changes based on Number of Function Evaluations (NFEs) for CIFAR-10 for our (blur) model on CIFAR-10. Our model offers significant performance benefits for low number of function evaluations.}
    \label{fig:speed_plot_cifar}
\end{figure}

\subsection{Nearest Samples in Training Data}
To verify that our model does not simply memorize the training dataset, we present generated images from our model and their nearest neighbor (L2 pixel distance) from the dataset. The results are shown in Figure~ \ref{fig:nearest_neighbor_ablation}.

\begin{figure}[H]
    \centering
     \begin{subfigure}{0.7\textwidth}
         \centering
         \includegraphics[width=\textwidth]{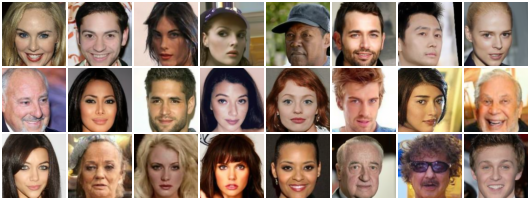}
         \caption{Generated images.}
     \end{subfigure}
     \hspace{.5em}
     \begin{subfigure}{0.7\textwidth}
         \centering
         \includegraphics[width=\textwidth]{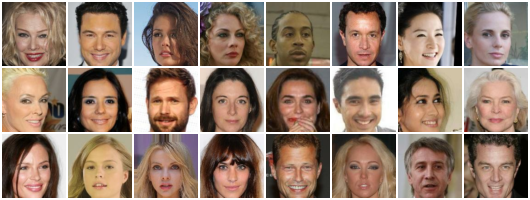}
         \caption{Nearest neighbors from dataset.}
     \end{subfigure}
    \caption{Generated images and nearest neighbors (L2 pixel distance) from the training dataset. As shown, the model produces new samples and does not simply memorize the training dataset.}
    \label{fig:nearest_neighbor_ablation}
\end{figure}

\subsection{Uncurated samples}
Figures~\ref{fig:appendix_uncurated_celeba} and \ref{fig:appendix_uncurated_cifar10} show more uncurated samples from our trained models on CelebA and CIFAR-10.

\begin{figure}[htp]
    \centering
    \includegraphics[width=\textwidth]{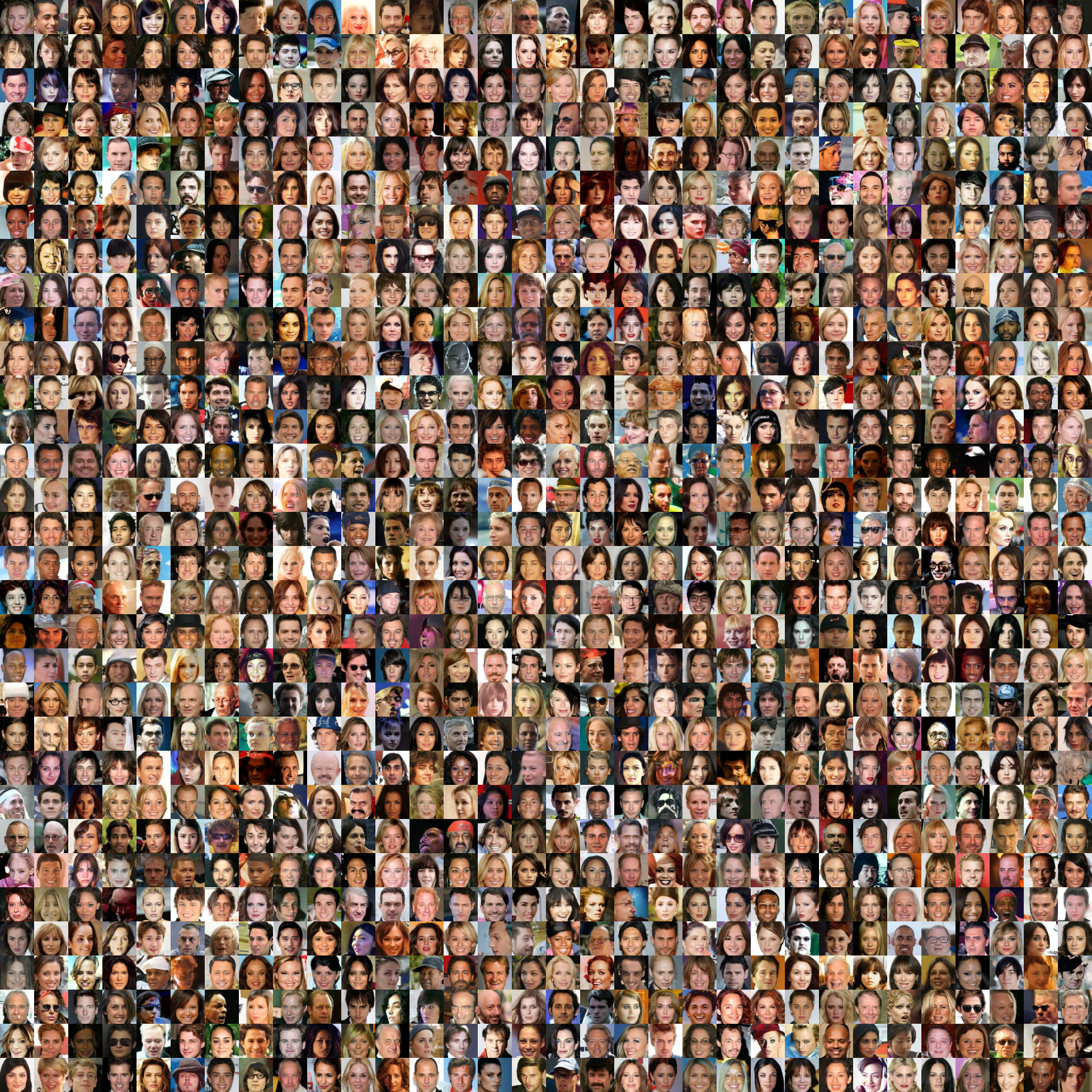}
    \caption{More uncurated samples from our blur model trained on CelebA.}
    \label{fig:appendix_uncurated_celeba}
\end{figure}

\begin{figure}[htp]
    \centering
    \includegraphics[width=\textwidth]{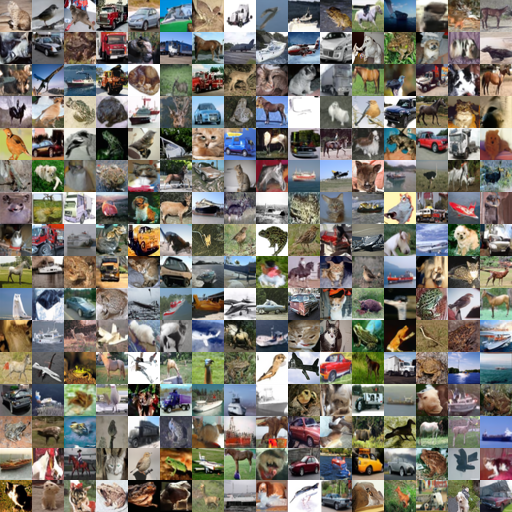}
    \caption{More uncurated samples from our blur model trained on CIFAR-10.}
    \label{fig:appendix_uncurated_cifar10}
\end{figure}

\begin{figure}[htp]
    \centering
    \includegraphics[width=\textwidth]{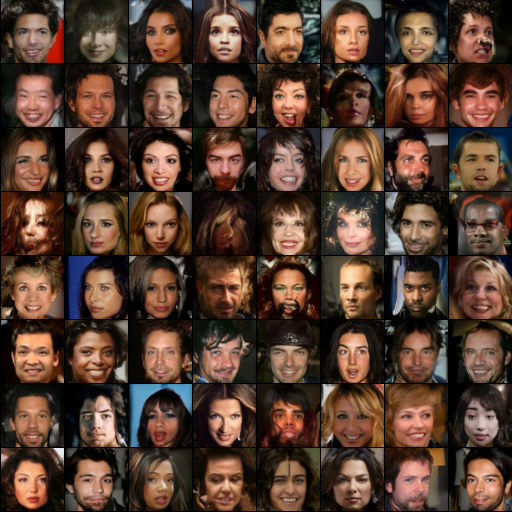}
    \caption{Uncurated samples from our masking model trained on CelebA.}
    \label{fig:appendix_masking}
\end{figure}

\end{document}